\begin{document}
\title{Learning and Exploiting Interclass Visual Correlations for Medical Image Classification}
%
\titlerunning{Learning Interclass Visual Correlations for Medical Image Classification}
%
%
\author{Dong Wei\textsuperscript{(\Letter)} \and
Shilei Cao \and
Kai Ma \and
Yefeng Zheng}
%
%
\authorrunning{D. Wei \textit{et al.}}
%
\institute{Tencent Jarvis Lab, Shenzhen, China\\
\email{\{donwei,eliasslcao,kylekma,yefengzheng\}@tencent.com}}
\maketitle              
\begin{abstract}
Deep neural network-based medical image classifications often use ``hard'' labels for training, where the probability of the correct category is 1 and those of others are 0.
However, these hard targets can drive the networks over-confident about their predictions and prone to overfit the training data, affecting model generalization and adaption.
Studies have shown that label smoothing and softening can improve classification performance.
Nevertheless, existing approaches are either non-data-driven or limited in applicability.
In this paper, we present the Class-Correlation Learning Network (CCL-Net) to learn interclass visual correlations from given training data, and produce soft labels to help with classification tasks.
Instead of letting the network directly learn the desired correlations,
we propose to learn them implicitly via distance metric learning of class-specific embeddings with a lightweight plugin CCL block.
An intuitive loss based on a geometrical explanation of correlation is designed for bolstering learning of the interclass correlations.
We further present end-to-end training of the proposed CCL block as a plugin head together with the classification backbone while generating soft labels on the fly.
Our experimental results on the International Skin Imaging Collaboration 2018 dataset demonstrate effective learning of the interclass correlations from training data, as well as consistent improvements in performance upon several widely used modern network structures with the CCL block.

\keywords{Computer-aided diagnosis \and Soft label \and Deep metric \mbox{learning}.}
\end{abstract}
\section{Introduction}
Computer-aided diagnosis (CAD) has important applications in medical image analysis, such as disease diagnosis and grading~\cite{doi2005CAD,doi2007CAD}.
Benefiting from the progress of deep learning techniques, automated CAD methods have advanced remarkably in recent years, and are now dominated by learning-based classification methods using deep neural networks~\cite{ker2017survey,litjens2017survey,shen2017survey}.
Notably, these methods mostly use ``hard'' labels as their targets for learning, where the probability for the correct category is 1 and those for others are 0.
However, these hard targets may adversely affect model generalization and adaption as the networks become over-confident about their predictions when trained to produce extreme values of 0 or 1, and prone to overfit the training data \cite{szegedy2016rethinking}.

Studies showed that label smoothing regularization (LSR) can improve classification performance to a limited extent~\cite{muller2019whenDoesLSHelp,szegedy2016rethinking}, although the smooth labels obtained by uniformly/proportionally distributing probabilities do not represent genuine interclass relations in most circumstances.
Gao \emph{et al.}~\cite{gao2017DLDL} proposed to convert the image label to a discrete label distribution
and use deep convolutional networks to learn from ground truth label distributions.
However, the ground truth distributions were constructed based on empirically defined rules, instead of data-driven.
Chen \emph{et al.}~\cite{chen2019GCN} modeled the interlabel dependencies as a correlation matrix for graph convolutional network based multilabel classification, by mining pairwise cooccurrence patterns of labels.
Although data-driven, this approach is not applicable to single-label scenarios in which labels do not occur together, restricting its application.
Arguably, softened labels reflecting real data characteristics improve upon the hard labels by more effectively utilizing available training data, as the samples from one specific category can help with training of similar categories~\cite{chen2019GCN,gao2017DLDL}.
For medical images, there exist rich interclass visual correlations, which have a great potential yet largely remain unexploited for this purpose.

In this paper, we present the Class-Correlation Learning Network (CCL-Net) to learn interclass visual correlations from given training data, and utilize the learned correlations to produce soft labels to help with
the classification tasks (Fig.~\ref{fig:network}).
Instead of directly letting the network learn the desired correlations,
we propose implicit learning of the correlations via distance metric learning~\cite{sohn2016DML,zhe2019DML} of class-specific embeddings~\cite{mikolov2013word2vec} with a lightweight plugin CCL block.
A new loss is designed for bolstering learning of the interclass correlations.
We further present end-to-end training of the proposed CCL block together with the classification backbone, while enhancing classification ability of the latter with soft labels generated on the fly.
In summary, our contributions are three folds:
\begin{itemize}
  \item We propose a CCL block for data-driven interclass correlation learning and label softening, based on distance metric learning of class embeddings.
      This block is conceptually simple, lightweight, and can be readily plugged as a CCL head into any mainstream backbone network for classification.
  \item We design an intuitive new loss based on a geometrical explanation of correlation to help with the CCL, and present integrated end-to-end training of the plugged CCL head together with the backbone network.
  \item We conduct thorough experiments on the International Skin Imaging Collaboration (ISIC) 2018 dataset.
      Results demonstrate effective data-driven CCL, and consistent performance improvements upon widely used modern network structures utilizing the learned soft label distributions.
\end{itemize}

\section{Method}\label{sec:method}
\subsubsection{Preliminaries}
Before presenting our proposed CCL block, let us first review the generic deep learning pipeline for single-label multiclass classification problems as preliminaries.
As shown in Fig.~\ref{fig:network}(b), an input $x$ first goes through a feature extracting function $f_1$ parameterized as a network with parameters $\theta_1$, producing a feature vector $\bm{f}$ for classification: $\bm{f}=f_1(x|\theta_1)$, where $\bm{f}\in \mathbb{R}^{n_1\times1}$.
Next, $\bm{f}$ is fed to a generalized fully-connected (fc) layer  $f_c$ (parameterized with $\theta_\mathrm{fc}$)
followed by the softmax function, obtaining the predicted classification probability distribution $\bm{q}^\mathrm{cls}$ for $x$: $\bm{q}^\mathrm{cls}=\mathrm{softmax}\left(f_c(\bm{f}|\theta_\mathrm{fc})\right)$, where $\bm{q}^\mathrm{cls}=[q^\mathrm{cls}_1, q^\mathrm{cls}_2, \ldots, q^\mathrm{cls}_K]^T$, $K$ is the number of classes, and $\sum_kq^\mathrm{cls}_k=1$.
To supervise the training of $f_1$ and $f_c$ (and the learning of $\theta_1$ and $\theta_\mathrm{fc}$ correspondingly), for each training sample $x$, a label $y\in\{1, 2, \ldots, K\}$ is given.
This label is then converted to the one-hot distribution~\cite{szegedy2016rethinking}: $\bm{y}=[\delta_{1,y}, \delta_{2,y}, \ldots, \delta_{K,y}]^T$, where $\delta_{k,y}$ is the Dirac delta function, which equals 1 if $k=y$ and 0 otherwise.
After that, the cross entropy loss $l_\mathrm{CE}$
can be used to compute the loss between $\bm{q}^\mathrm{cls}$ and $\bm{y}$:  $l_\mathrm{CE}(\bm{q}^\mathrm{cls},\bm{y})=-\sum_k\delta_{k,y}\log q_k^\mathrm{cls}$.
Lastly, the loss is optimized by an optimizer (e.g., Adam~\cite{kingma2014adam}), and $\theta_1$ and $\theta_\mathrm{fc}$ are updated by gradient descent algorithms.

\begin{figure}[t]
  \centering
  \includegraphics[height=.354\textwidth, trim={41, 119, 20, 54}, clip]{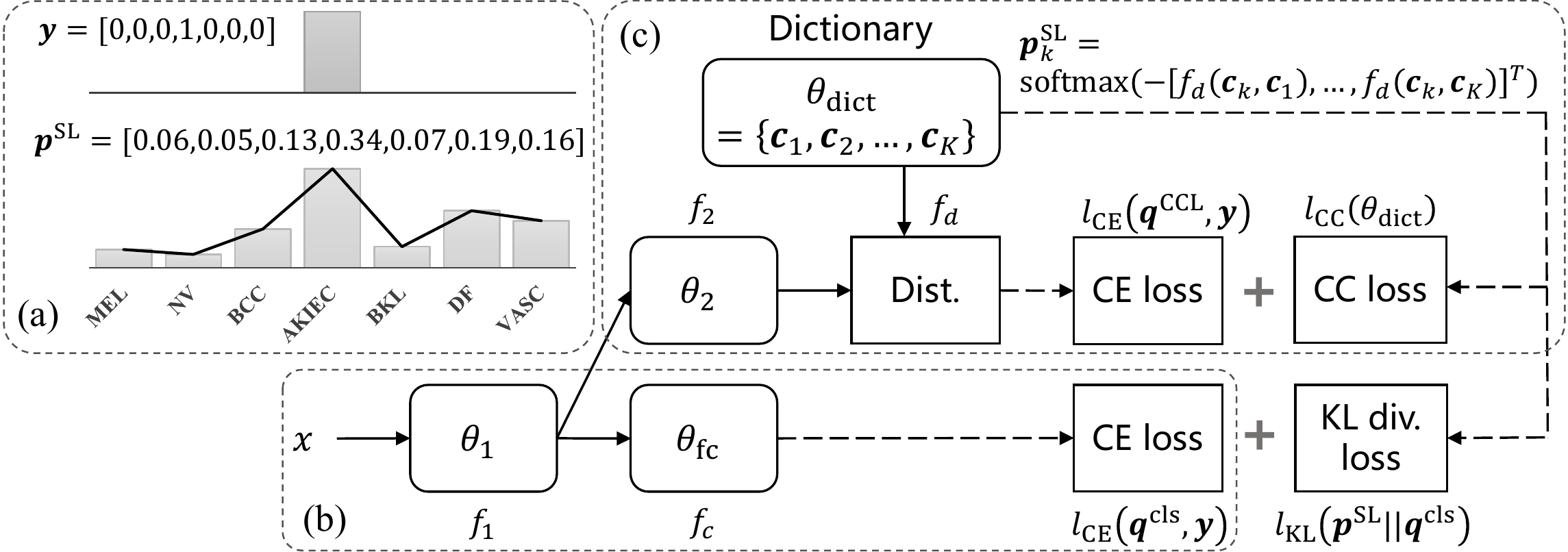}
  \caption{Diagram of the proposed CCL-Net.
  (a) Soft label distributions ($\bm{p}^\mathrm{SL}$) are learned from given training data and hard labels $\bm{y}$.
  (b) Generic deep learning pipeline for classification.
  (c) Structure of the proposed CCL block.
  This lightweight block can be plugged into any classification backbone network as a head and trained end-to-end together, and boost performance by providing additional supervision signals with $\bm{p}^\mathrm{SL}$.}\label{fig:network}
\end{figure}

As mentioned earlier, the ``hard'' labels may adversely affect model generalization as the networks become over-confident about their predictions.
In this sense, LSR~\cite{szegedy2016rethinking} is a popular technique to make the network less confident by smoothing the one-hot label distribution $\bm{y}$ to become $\bm{p}^\mathrm{LSR}=[p^\mathrm{LSR}_{1,y}, \ldots, p^\mathrm{LSR}_{K,y}]^T$, where
$p^\mathrm{LSR}_{k,y}=(1-\epsilon)\delta_{k,y} + \epsilon u(k)$,
$\epsilon$ is a weight, and $u(k)$ is a distribution over class labels
for which the uniform~\cite{muller2019whenDoesLSHelp} or \emph{a priori}~\cite{muller2019whenDoesLSHelp,szegedy2016rethinking} distributions are proposed.
Then, the cross entropy is computed with $\bm{p}^\mathrm{LSR}$ instead of $\bm{y}$.

\subsubsection{Learning Interclass Visual Correlations for Label Softening}
In most circumstances, LSR cannot reflect the genuine interclass relations underlying the given training data.
Intuitively, the probability redistribution should be biased towards visually similar classes, so that samples from these classes can boost training of each other.
For this purpose, we propose to learn the underlying visual correlations among classes from the training data and produce soft label distributions that more authentically reflect intrinsic data properties.
Other than learning the desired correlations directly, we learn them implicitly by learning interrelated yet discriminative class embeddings via distance metric learning.
Both the concepts of feature embeddings and deep metric learning have proven useful in the literature (e.g., \cite{mikolov2013word2vec,sohn2016DML,zhe2019DML}).
To the best of our knowledge, however, combining them for data-driven learning of interclass visual correlations and label softening has not been done before.

The structure of the CCL block is shown in Fig.~\ref{fig:network}(c), which consists of a lightweight embedding function $f_2$ (parameterized with $\theta_2$), a dictionary $\theta_\mathrm{dict}$, a distance metric function $f_d$, and two loss functions.
Given the feature vector $\bm{f}$ extracted by $f_1$, the embedding function $f_2$ projects $\bm{f}$ into the embedding space: $\bm{e}=f_2(\bm{f}|\theta_2)$, where $\bm{e}\in \mathbb{R}^{n_2\times1}$.
The dictionary maintains all the class-specific embeddings: $\theta_\mathrm{dict}=\{\bm{c}_k\}$.
Using $f_d$, the distance between the input embedding and every class embedding can be calculated by $d_k=f_d(\bm{e}, \bm{c}_k)$.
In this work, we use $f_d(\bm{e}_1,\bm{e}_2)=\left\| \frac{\bm{e}_1}{\|\bm{e}_1\|} - \frac{\bm{e}_2}{\|\bm{e}_2\|} \right\|^2$, where $\|\cdot\|$ is the L2 norm.
Let $\bm{d}=[d_1, d_2, \ldots, d_K]^T$, we can predict another classification probability distribution $\bm{q}^\mathrm{CCL}$ based on the distance metric: $\bm{q}^\mathrm{CCL}=\mathrm{softmax}(-\bm{d})$, and a cross entropy loss $l_\mathrm{CE}(\bm{q}^\mathrm{CCL}, \bm{y})$ can be computed accordingly.
To enforce interrelations among the class embeddings, we innovatively propose the class correlation loss $l_\mathrm{CC}$:
\begin{equation}\label{eq:l_cc}
  l_\mathrm{CC}(\theta_\mathrm{dict}) = 1/K^2{\sum}^K_{k_1=1}{\sum}^K_{k_2=1}\Big|f_d(\bm{c}_{k_1}, \bm{c}_{k_2}) - b\Big|_+
\end{equation}
where $|\cdot|_+$ is the Rectified Linear Unit (ReLU), and $b$ is a margin parameter.
Intuitively, $l_\mathrm{CC}$ enforces the class embeddings to be no further than a distance $b$ from each other in the embedding space, encouraging correlations among them.
Other than attempting to tune the exact value of $b$, we resort to the geometrical meaning of correlation between two vectors:
if the angle between two vectors is smaller than 90$^{\circ}$, they are considered correlated.
Or equivalently for L2-normed vectors, if the Euclidean distance between them is smaller than $\sqrt{2}$, they are considered correlated.
Hence, we set $b=(\sqrt{2})^2$.
Then, the total loss function for the CCL block is defined as
$\mathcal{L}_\mathrm{CCL}=l_\mathrm{CE}(\bm{q}^\mathrm{CCL}, \bm{y}) + \alpha_\mathrm{CC} l_\mathrm{CC}(\theta_\mathrm{dict})$,
where $\alpha_\mathrm{CC}$ is a weight.
Thus, $\theta_2$ and $\theta_\mathrm{dict}$ are updated by optimizing $\mathcal{L}_\mathrm{CCL}$.

After training, we define the soft label distribution $\bm{p}^\mathrm{SL}_k$ for class $k$ as:
\begin{equation}\label{eq:soft_label}
  \bm{p}^\mathrm{SL}_k = \mathrm{softmax}(-[f_d(\bm{c}_1, \bm{c}_k), \ldots, f_d(\bm{c}_K, \bm{c}_k)]^T) = [p^\mathrm{SL}_{1,k}, \ldots, p^\mathrm{SL}_{K,k}]^T.
\end{equation}
It is worth noting that by this definition of soft label distributions, the correct class always has the largest probability, which is a desired property especially at the start of training.
Next, we describe our end-to-end training scheme of the CCL block together with a backbone classification network.

\subsubsection{Integrated End-to-End Training with Classification Backbone}
\label{sec:e2e_train}
As a lightweight module, the proposed CCL block can be plugged into any mainstream classification backbone network---as long as a feature vector can be pooled for an input image---and trained together in an end-to-end manner.
To utilize the learned soft label distributions, we introduce a Kullback-Leibler divergence (KL div.) loss
$l_\mathrm{KL}(\bm{p}^\mathrm{SL}||\bm{q}^\mathrm{cls}) = \sum_{k}p^\mathrm{SL}_k\log (p^\mathrm{SL}_k/q^\mathrm{cls}_k)$
in the backbone network (Fig~\ref{fig:network}), and the total loss function for classification becomes
\begin{equation}\label{eq:l_cls}
  \mathcal{L}_\mathrm{cls} = l_\mathrm{CE}(\bm{q}^\mathrm{cls}, \bm{y}) + l_\mathrm{KL}(\bm{p}^\mathrm{SL}||\bm{q}^\mathrm{cls}).
\end{equation}
Consider that $l_\mathrm{CC}$ tries to keep the class embeddings within a certain distance of each other, it is somehow adversarial to
the goal of
the backbone network which tries to push them away from each other as much as possible.
In such cases, alternative training schemes are usually employed~\cite{goodfellow2014generative} and we follow this way.
Briefly, in each training iteration, the backbone network is firstly updated with the CCL head frozen, and then it is frozen to update the CCL head;
more details about the training scheme are provided in Algorithm~\ref{alg:training}.
After training, the prediction is made according to $\bm{q}^\mathrm{cls}$: $\hat{y}=\mathrm{argmax}_k(q^\mathrm{cls}_k)$.

\begin{algorithm}[!t]
  \caption{End-to-end training of the proposed CCL-Net.}\label{alg:training}
  \begin{algorithmic}[1]
  \REQUIRE Training images $\{x\}$ and labels $\{y\}$
  \ENSURE Learned network parameters $\{\theta_1, \theta_\mathrm{fc}, \theta_2, \theta_\mathrm{dict}\}$
  \STATE {Initialize $\theta_1, \theta_\mathrm{fc}, \theta_2, \theta_\mathrm{dict}$}
  \FOR {number of training epochs}
    \FOR {number of minibatches}
    \STATE Compute soft label distributions $\{\bm{p}^\mathrm{SL}\}$ from $\theta_\mathrm{dict}$
    \STATE Sample minibatch of $m$ images $\{x^{(i)}|i\in\{1,\ldots,m\}\}$, compute $\{\bm{f}^{(i)}\}$
    \STATE Update $\theta_1$ and $\theta_\mathrm{fc}$ by stochastic gradient descending: $\nabla_{\{\theta_1, \theta_\mathrm{fc}\}} \frac{1}{m} \sum_{i=1}^{m} \mathcal{L}_\mathrm{cls}$
    \STATE Update $\theta_2$ and $\theta_\mathrm{dict}$ by stochastic gradient descending:$\nabla_{\{\theta_2, \theta_\mathrm{dict}\}} \frac{1}{m} \sum_{i=1}^{m} \mathcal{L}_\mathrm{CCL}$
    \ENDFOR
  \ENDFOR
  \end{algorithmic}
\end{algorithm}

During training, we notice that the learned soft label distributions sometimes start to linger around a fixed value for the correct classes and distribute about evenly across other classes after certain epochs;
or in other words, approximately collapse into LSR with $u(k)=$ uniform distribution (the proof is provided in the supplementary material).
This is because of the strong capability of deep neural networks in fitting any data~\cite{zhang2016understanding}.
As $l_\mathrm{CC}$ forces the class embeddings to be no further than $b$ from each other in the embedding space, a network of sufficient capacity has the potential to make them exactly the distance $b$ away from each other (which is overfitting) when well trained.
To prevent this from happening, we use the average correct-class probability $\bar{p}^\mathrm{SL}_{k,k}=1/K\sum_{k}p^\mathrm{SL}_{k,k}$ as a measure of the total softness of the label set (the lower the softer, as the correct classes distribute more probabilities to other classes),
and consider that the CCL head has converged if $\bar{p}^\mathrm{SL}_{k,k}$ does not drop for 10 consecutive epochs.
In such case, $\theta_\mathrm{dict}$ is frozen, whereas the rest of the CCL-Net keeps updating.

\section{Experiments}
\subsubsection{Dataset and Evaluation Metrics}
The ISIC 2018 dataset is provided by the Skin Lesion Analysis Toward Melanoma Detection 2018 challenge~\cite{codella2019ISIC},
for prediction of seven
disease categories with dermoscopic images, including: melanoma (MEL), melanocytic nevus (NV), basal cell carcinoma (BCC), actinic keratosis/Bowen’s disease (AKIEC), benign keratosis (BKL), dermatofibroma (DF), and vascular lesion (VASC)
(example images are provided in Fig.~\ref{fig:data}).
It comprises 10,015 dermoscopic images,
including 1,113 MEL, 6,705 NV, 514 BCC, 327 AKIEC, 1,099 BKL, 115 DF, and 142 VASC images.
We randomly split the data into a training and a validation set (80:20 split) while keeping the original interclass ratios, and report evaluation results on the validation set.
The employed evaluation metrics include accuracy, Cohen's kappa coefficient~\cite{cohen1960coefficient}, and unweighed means of F1 score and Jaccard similarity coefficient.

\begin{figure}[t]
  \centering
  \begin{minipage}[c]{.135\textwidth}
    \centering
    \includegraphics[width=\linewidth]{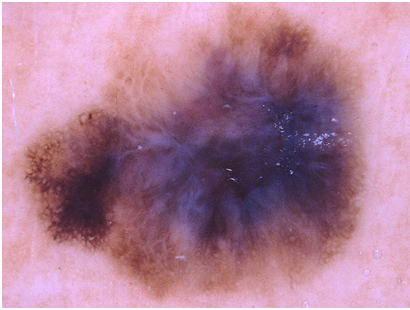}\\
    \scriptsize MEL
  \end{minipage}
  \begin{minipage}[c]{.135\textwidth}
    \centering
    \includegraphics[width=\linewidth]{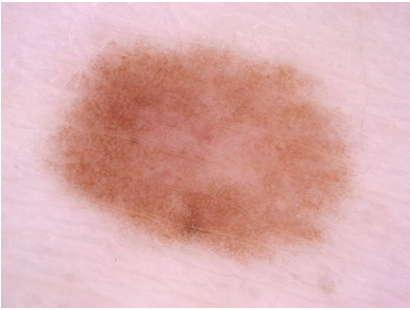}\\
    \scriptsize NV
  \end{minipage}
  \begin{minipage}[c]{.135\textwidth}
    \centering
    \includegraphics[width=\linewidth]{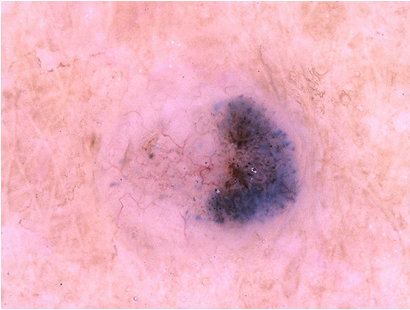}\\
    \scriptsize BCC
  \end{minipage}
  \begin{minipage}[c]{.135\textwidth}
    \centering
    \includegraphics[width=\linewidth]{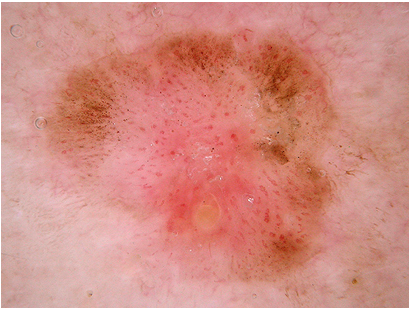}\\
    \scriptsize AKIEC
  \end{minipage}
  \begin{minipage}[c]{.135\textwidth}
    \centering
    \includegraphics[width=\linewidth]{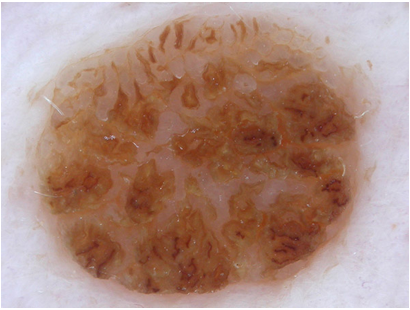}\\
    \scriptsize BKL
  \end{minipage}
  \begin{minipage}[c]{.135\textwidth}
    \centering
    \includegraphics[width=\linewidth]{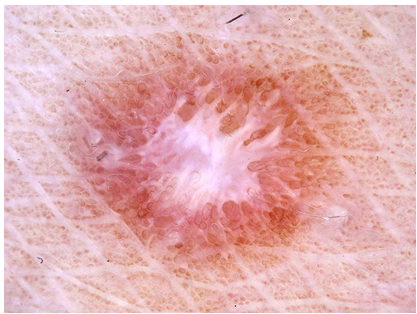}\\
    \scriptsize DF
  \end{minipage}
  \begin{minipage}[c]{.135\textwidth}
    \centering
    \includegraphics[width=\linewidth]{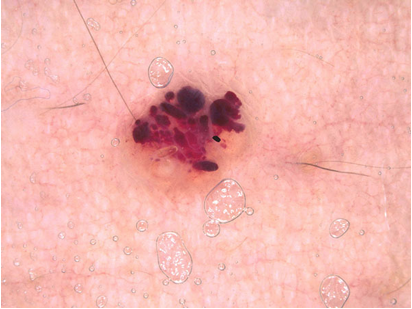}\\
    \scriptsize VASC
  \end{minipage}
  \caption{Official example images of the seven diseases in the ISIC 2018 disease classification dataset~\cite{codella2019ISIC}.}\label{fig:data}
\end{figure}

\subsubsection{Implementation}
In this work, we use commonly adopted, straightforward training schemes to demonstrate effectiveness of the proposed CCL-Net in data-driven learning of interclass visual correlations and improving classification performance, rather than sophisticated training strategies or heavy ensembles.
Specifically, we adopt the stochastic gradient descent optimizer with a momentum of 0.9, weight decay of $10^{-4}$, and the backbone learning rate initialized to 0.1 for all experiments.
The learning rate for the CCL head (denoted by $lr_\mathrm{CCL}$) is initialized to 0.0005 for all experiments,
except for when we study the impact of varying $lr_\mathrm{CCL}$.
Following~\cite{he2016resnet}, we multiply the initial learning rates by 0.1 twice during training such that the learning process can saturate at a higher limit.\footnote{The exact learning-rate-changing epochs as well as total number of training epochs vary for different backbones due to different network capacities.}
A minibatch of 128 images is used.
The input images are resized to have a short side of 256 pixels while maintaining original aspect ratios.
Online data augmentations including random cropping, horizontal and vertical flipping, and color jittering
are employed for training.
For testing, a single central crop of size 224$\times$224 pixels is used as input.
Gradient clipping is employed for stable training.
$\alpha_\mathrm{CC}$ is set to 10 empirically.
The experiments are implemented using the PyTorch package.
A singe Tesla P40 GPU is used for model training and testing.
For $f_2$, we use three fc layers of width 1024, 1024, and 512 (i.e., $n_2=512$), with batch normalization and ReLU in between.

\begin{table}[t]
\caption{Experimental results on the ISIC18 dataset, including comparisons with the baseline networks and LSR~\cite{szegedy2016rethinking}.
Higher is better for all evaluation metrics.}\label{tab:ISIC18}
\scalebox{.965}{
\centering
\begin{threeparttable}
\setlength{\tabcolsep}{2.25mm}{
\begin{tabular}{llccccccc}
\hline\hline
Backbone & Method & Accuracy & F1 score & Kappa & Jaccard \\
\hline
ResNet-18~\cite{he2016resnet}
          & Baseline & 0.8347 & 0.7073 & 0.6775 & 0.5655 \\
          & LSR-u1 & 0.8422 & 0.7220 & 0.6886 & 0.5839 \\
          & LSR-u5 & 0.8437 & 0.7211 & 0.6908 & 0.5837 \\
          & LSR-a1 & 0.7883 & 0.6046 & 0.5016 & 0.4547 \\
          & LSR-a5 & 0.7029 & 0.2020 & 0.1530 & 0.1470 \\
          & CCL-Net (ours) & \textbf{0.8502} & \textbf{0.7227} & \textbf{0.6986} & \textbf{0.5842} \\
\hline
EfficientNet-B0~\cite{tan2019efficientnet}
          & Baseline & 0.8333 & 0.7190 & 0.6696 & 0.5728 \\
          & LSR-u1 & 0.8382 & 0.7014 & 0.6736 & 0.5573 \\
          & LSR-u5 & 0.8432 & 0.7262 & 0.6884 & 0.5852 \\
          & LSR-a1 & 0.8038 & 0.6542 & 0.5526 & 0.5058 \\
          & LSR-a5 & 0.7189 & 0.2968 & 0.2295 & 0.2031 \\
          & CCL-Net (ours) & \textbf{0.8482} & \textbf{0.7390} & \textbf{0.6969} & \textbf{0.6006} \\
\hline
MobileNetV2~\cite{sandler2018mobilenetv2}
          & Baseline & 0.8308 & 0.6922 & 0.6637 & 0.5524 \\
          & LSR-u1 & 0.8248 & 0.6791 & 0.6547 & 0.5400 \\
          & LSR-u5 & 0.8253 & 0.6604 & 0.6539 & 0.5281 \\
          & LSR-a1 & 0.8068 & 0.6432 & 0.5631 & 0.4922 \\
          & LSR-a5 & 0.7114 & 0.2306 & 0.2037 & 0.1655 \\
          & CCL-Net (ours) & \textbf{0.8342} & \textbf{0.7050} & \textbf{0.6718} & \textbf{0.5648} \\
\hline\hline
\end{tabular}}
\begin{tablenotes}
\item[*] LSR settings: -u1: $u(k)=$ uniform, $\epsilon=0.1$;
         -u5: $u(k)=$ uniform, $\epsilon=0.5228$;
         -a1: $u(k)=$ \emph{a priori}, $\epsilon=0.1$;
         -a5: $u(k)=$ \emph{a priori}, $\epsilon=0.5228$.
\end{tablenotes}
\end{threeparttable}}
\end{table}

\subsubsection{Comparison with Baselines and LSR}
We quantitatively compare our proposed CCL-Net with various baseline networks using the same backbones.
Specifically, we experiment with three widely used backbone networks: ResNet-18~\cite{he2016resnet}, MobileNetV2~\cite{sandler2018mobilenetv2}, and EfficientNet-B0~\cite{tan2019efficientnet}.
In addition, we compare our CCL-Net with the popular LSR~\cite{szegedy2016rethinking} with different combinations of $\epsilon$ and $u(k)$: $\epsilon\in\{0.1, 0.5228\}$ and $u(k)\in\{\mathrm{uniform}, a\ priori\}$, resulting in a total of four settings
(we compare with $\epsilon=0.5228$ since for the specific problem, the learned soft label distributions would eventually approximate
uniform LSR with this $\epsilon$ value, if the class embeddings $\theta_\mathrm{dict}$ are not frozen after convergence).
The results are charted in Table~\ref{tab:ISIC18}.
As we can see, the proposed CCL-Net achieves the best performances on all evaluation metrics for all backbone networks, including Cohen's kappa~\cite{cohen1960coefficient} which is more appropriate for imbalanced data than accuracy.
These results demonstrate effectiveness of utilizing the learned soft label distributions in improving classification performance of the backbone networks.
We also note that moderate improvements are achieved by the LSR settings with $u(k)=$ uniform on two of the three backbone networks, indicating effects of this simple strategy.
Nonetheless, these improvements are outweighed by those achieved by our CCL-Net.
In addition to the superior performances to LSR, another advantage of the CCL-Net is that it can intuitively reflect intrinsic interclass correlations underlying the given training data, at a minimal extra overhead.
Lastly, it is worth noting that the LSR settings with $u(k)=$ \emph{a priori} decrease all evaluation metrics from the baseline performances, suggesting inappropriateness of using LSR with \emph{a priori} distributions for significantly imbalanced data.

\begin{table}[t]
\caption{Properties of the CCL by varying the learning rate of the CCL head. ResNet-18~\cite{he2016resnet} backbone is used.
}\label{tab:softness}
\setlength{\tabcolsep}{2.9mm}{
\scalebox{0.965}{
\centering
\begin{tabular}{ccccccc}
  \hline
  $lr_\mathrm{CCL}$ & 0.0001 & 0.0005 & 0.001 & 0.005 & 0.01 & 0.05 \\
  \hline
  Epochs of converge & Never & 137 & 76 & 23 & 12 & 5 \\
  $\bar{p}^\mathrm{SL}_{k,k}$ & 0.48 & 0.41 & 0.38 & 0.35 & 0.33 & 0.30 \\
  Accuracy & 0.8422 & \textbf{0.8502} & 0.8452 & 0.8442 & 0.8422 & 0.8417 \\
  Kappa & 0.6863 & \textbf{0.6986} & 0.6972 & 0.6902 & 0.6856 & 0.6884 \\
  \hline
\end{tabular}}}
\end{table}

\begin{figure}[t]
  \centering
  \includegraphics[height=.29\textwidth]{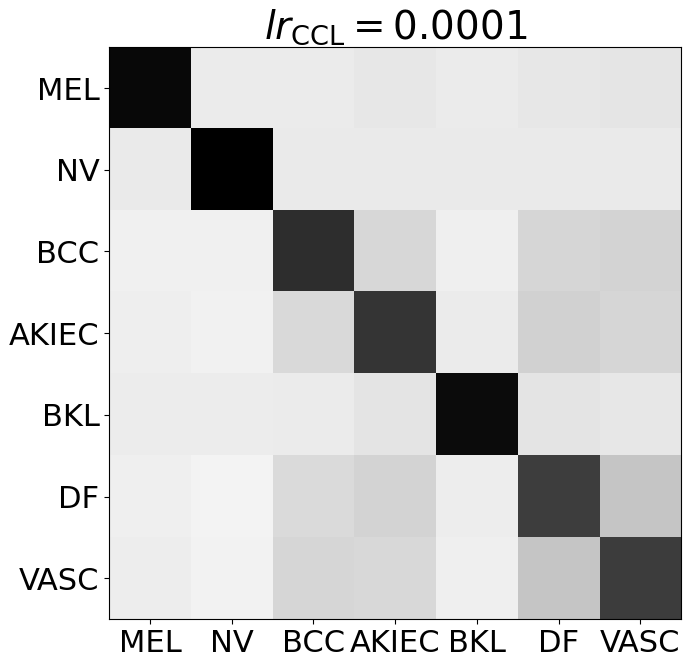}
  \includegraphics[height=.29\textwidth]{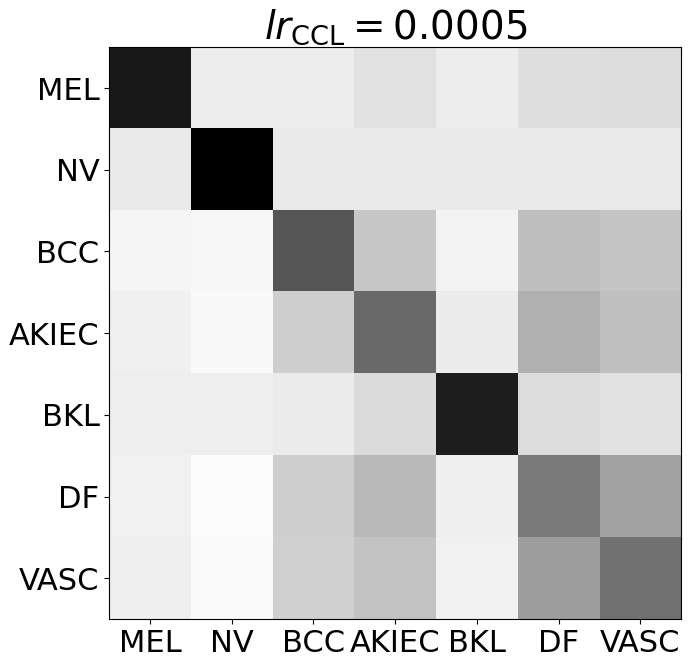}
  \includegraphics[height=.29\textwidth]{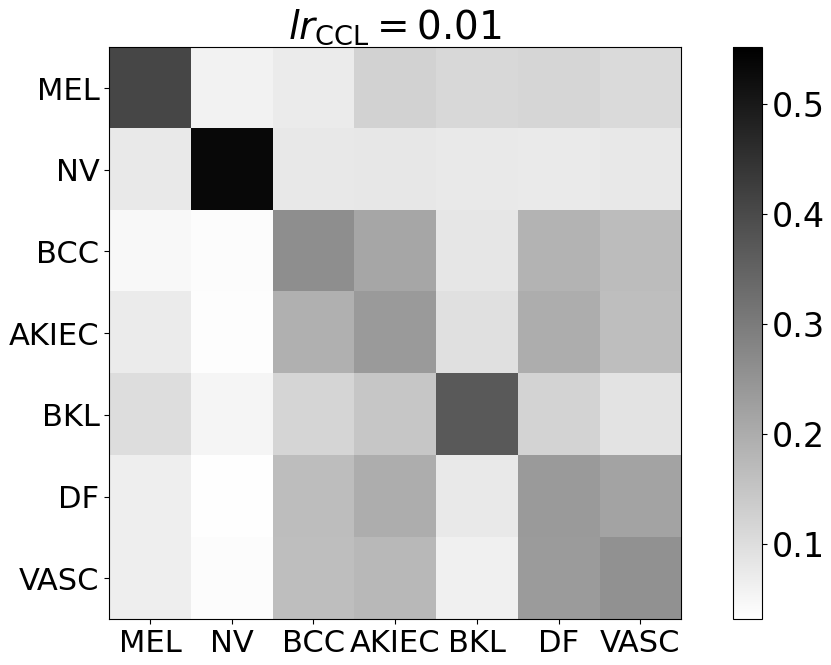}
  \caption{Visualization of the learned soft label distributions using different $lr_\mathrm{CCL}$, where each row of a matrix represents the soft label distribution of a class.}\label{fig:corr_mat}
\end{figure}

\subsubsection{Analysis of Interclass Correlations Learned with CCL-Net}
\label{sec:analysis}
Next, we investigate properties of the learned interclass visual correlations by the proposed CCL-Net, by varying the value of $lr_\mathrm{CCL}$.
Specifically, we examine the epochs and label softness when the CCL head converges, as well as the final accuracies and kappa coefficients.
Besides $\bar{p}_{k,k}^\mathrm{SL}$, the overall softness of the set of soft label distributions can also be intuitively perceived by visualizing all $\{\bm{p}_k^\mathrm{SL}\}$ together as a correlation matrix.
Note that this matrix does not have to be symmetric, since the softmax operation is separately conducted for each class.
Table~\ref{tab:softness} presents the results, and Fig.~\ref{fig:corr_mat} shows three correlation matrices using different $lr_\mathrm{CCL}$.
Interestingly, we can observe that as $lr_\mathrm{CCL}$ increases, the CCL head converges faster with higher softness.
The same trend can be observed in Fig.~\ref{fig:corr_mat}, where the class probabilities become more spread with the increase of $lr_\mathrm{CCL}$.
Notably, when $lr_\mathrm{CCL}=0.0001$, the CCL head does not converge in given epochs and the resulting label distributions are not as soft.
This indicates that when $lr_\mathrm{CCL}$ is too small, the CCL head cannot effectively learn the interclass correlations.
Meanwhile, the best performance is achieved when $lr_\mathrm{CCL}=0.0005$ in terms of both accuracy and kappa,
instead of other higher values.
This may suggest that very quick convergence may also be suboptimal, probably because the prematurely frozen class embeddings are learned from the less representative feature vectors in the early stage of training.
In summary, $lr_\mathrm{CCL}$ is a crucial parameter for the CCL-Net,
though it is not difficult to tune based on our experience.

\section{Conclusion}
In this work, we presented CCL-Net for data-driven interclass visual correlation learning and label softening.
Rather than directly learning the desired correlations, CCL-Net implicitly learns them via distance-based metric learning of class-specific embeddings, and constructs soft label distributions from learned correlations by performing softmax on pairwise distances between class embeddings.
Experimental results showed that the learned soft label distributions not only reflected intrinsic interrelations underlying given training data, but also boosted classification performance upon various baseline networks.
In addition, CCL-Net outperformed the popular LSR technique.
We plan to better utilize the learned soft labels and extend the work for multilabel problems in the future.
%
\subsubsection{Acknowledgments.} This work was funded by the Key Area Research and Development Program of Guangdong Province, China (No. 2018B010111001), National Key Research and Development Project (2018YFC2000702), and Science and Technology Program of Shenzhen, China (No. ZDSYS201802021814180).
%
%
%
\bibliographystyle{splncs04}
\bibliography{CCL-Net}
%
%
%
%
%
%
\newpage
\begin{center}
\textbf{\large Supplementary Material: Learning and Exploiting Interclass Visual Correlations for Medical Image Classification}
\end{center}
\setcounter{equation}{0}
\setcounter{figure}{0}
\setcounter{table}{0}
\setcounter{page}{1}
\makeatletter
\renewcommand{\theequation}{S\arabic{equation}}
\renewcommand{\thefigure}{S\arabic{figure}}
\renewcommand{\theproposition}{S\arabic{proposition}}

\begin{proposition}
When the CCL head perfectly overfits, the learned soft label distributions are equivalent to LSR using a definite $\epsilon$ with $u(k)$ being a uniform distribution.
\end{proposition}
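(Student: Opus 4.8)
The plan is to translate the informal notion of \emph{perfect overfitting} into an explicit condition on the pairwise distances of the class embeddings, substitute that condition into the soft label definition of Eq.~\eqref{eq:soft_label}, evaluate the resulting softmax in closed form, and finally read off the value of $\epsilon$ that makes the outcome coincide with uniform LSR.

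First I would fix the overfit condition. As argued in the main text, $l_\mathrm{CC}$ pushes every pair of distinct embeddings to within the margin $b$ while the classification term resists collapse, so the equilibrium reached by a head of sufficient capacity is $f_d(\bm{c}_{k_1},\bm{c}_{k_2}) = b$ for all $k_1\neq k_2$. Together with the trivial identity $f_d(\bm{c}_k,\bm{c}_k)=0$ (a unit vector has zero distance to itself under the normalized metric), this fully determines the distance vector feeding Eq.~\eqref{eq:soft_label}: for class $k$ it carries a single $0$ in position $k$ and the value $b$ in every other position.

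Next I would evaluate the softmax. Since $\mathrm{softmax}(-\bm{d})$ exponentiates $-d_{j,k}$, the correct entry contributes $e^{0}=1$ and each of the remaining $K-1$ entries contributes $e^{-b}$, yielding a common normalizer $Z = 1 + (K-1)e^{-b}$. Hence $p^\mathrm{SL}_{k,k} = 1/Z$ and $p^\mathrm{SL}_{j,k} = e^{-b}/Z$ for $j\neq k$; crucially these two values are independent of $k$, so the entire soft label set is governed by a single pair of numbers. Matching the off-diagonal entry to the uniform-LSR off-diagonal $\epsilon/K$ forces
\[
\epsilon = \frac{K e^{-b}}{1 + (K-1)e^{-b}},
\]
and a one-line substitution confirms that the diagonal entry $1/Z$ then equals $(1-\epsilon)+\epsilon/K$, so both families of entries agree and the two distributions are identical. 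Setting $b=2$ and $K=7$ gives $\epsilon\approx 0.5228$, matching the value used in the LSR-u5 comparison.

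The softmax algebra is routine; the only point demanding care is the first step, namely justifying that perfect overfitting saturates \emph{all} interclass distances at exactly $b$ (rather than merely bounding them by $b$) and does so \emph{uniformly} across classes. It is precisely this uniformity that collapses the per-class softmax to one shared pair of values, and hence yields a class-independent, \emph{uniform} smoothing kernel rather than an \emph{a priori} one.
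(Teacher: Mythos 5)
Your proposal is correct and follows essentially the same route as the paper's own proof: assert that perfect overfitting saturates all interclass distances at exactly $b$ (with self-distance $0$), evaluate the softmax to get $p^\mathrm{SL}_{k,k}=1/S$ and $p^\mathrm{SL}_{k',k}=e^{-b}/S$ with $S=1+(K-1)e^{-b}$, and identify $\epsilon=Ke^{-b}/S$. Your added checks --- verifying the diagonal entry equals $(1-\epsilon)+\epsilon/K$ and computing $\epsilon\approx 0.5228$ for $b=2$, $K=7$ --- are consistent with, and slightly more explicit than, the paper's argument.
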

\begin{proof}
The CCL head, if capable, may eventually overfit the data by pushing the class embeddings as far as possible, up to the extent constrained by Equation~(\ref{eq:l_cc}).
This results in class embeddings in $\theta_\mathrm{dict}$ equally distant from each other, and the distance is exactly $b$, i.e., $f_d(\bm{c}_{k'}, \bm{c}_k)=b$ if $k’\neq k$ and 0 otherwise.
Applying the softmax operation, for class $k$ we have
\begin{equation}
  p^\mathrm{SL}_{k', k}
  =\frac{\exp(-f_d(\bm{c}_{k'}, \bm{c}_k))}{\sum_{k''}\exp(-f_d(\bm{c}_{k''}, \bm{c}_k))}
  =\begin{cases}
     \exp(-0)/S=1/S, & \mbox{if } k'=k, \\
     \exp(-b)/S=a/S, & \mbox{otherwise},
   \end{cases}
\end{equation}
where $a=\exp(-b)$ and $S=1+(K-1)a$.
Therefore, in the resulted soft label distribution for class $k$, the probability for the correct class is $1/S$, and those for other classes are all $a/S$, equivalent to LSR of $u(k)=1/K$ and $\epsilon=Ka/S$.\qed
\end{proof}

\begin{figure}
  \centering
  \includegraphics[height=.29\textwidth]{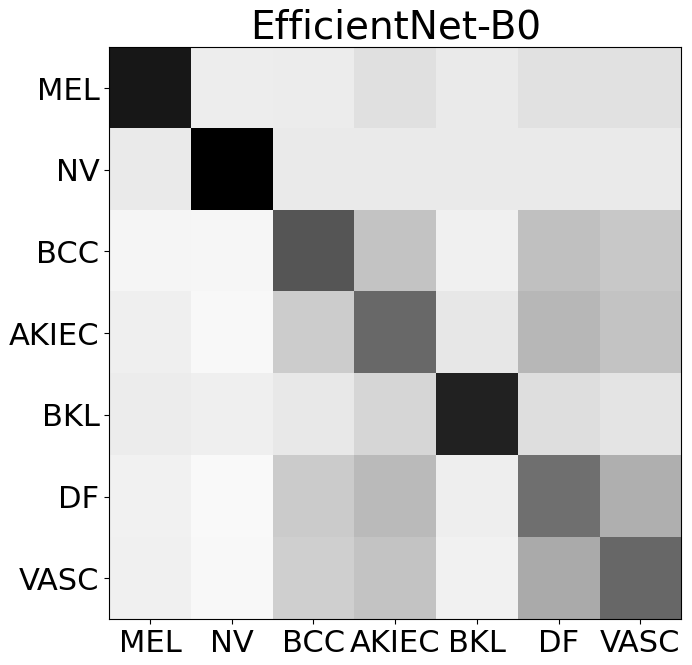}
  \includegraphics[height=.29\textwidth]{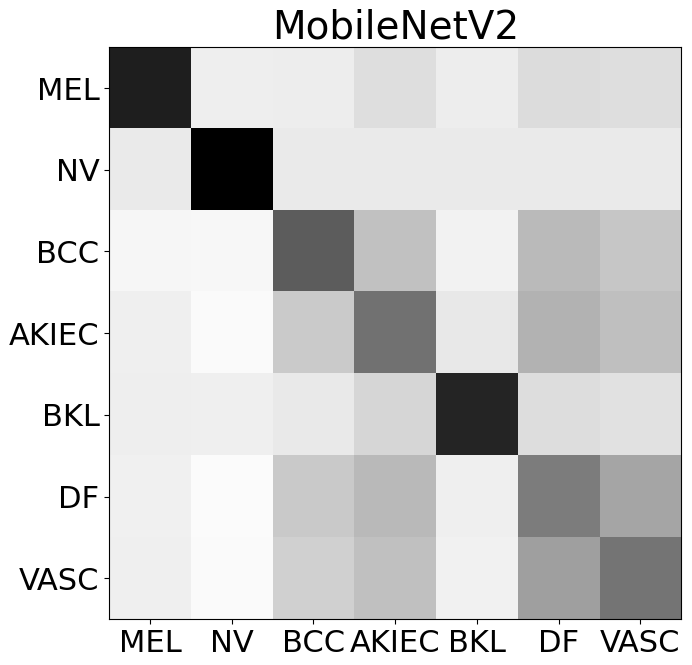}
  \includegraphics[height=.29\textwidth]{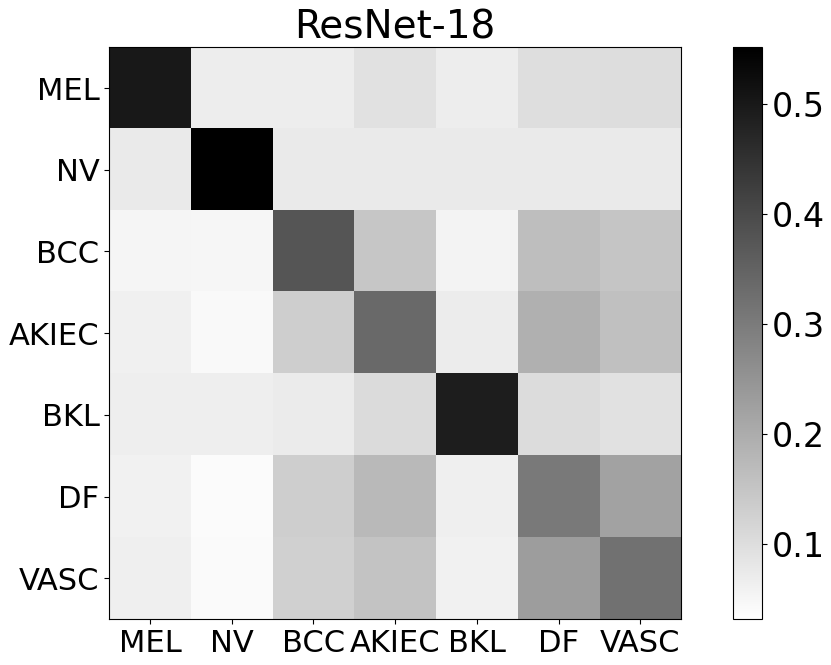}
  \caption{Visualization of the learned soft label distributions using different backbone networks ($lr_\mathrm{CCL}=0.0005$).
  Each row of a matrix represents the soft label distribution of a class.
  From the visualized correlation matrices, we observe that the interclass correlations/soft label distributions learned by different backbone networks are remarkably consistent.}
\end{figure}

\end{document}